\newtheorem{theorem}{Theorem}
\newtheorem{proposition}{Proposition}	
\newtheorem{definition}{Definition}
\title{\LARGE \bf
Contact-Aware Controller Design for Complementarity Systems
}
\author{Alp Aydinoglu$^{1}$, Victor M. Preciado$^{1}$, Michael Posa$^{1}$
\thanks{*This work was supported by the National Science Foundation under Grant No. CMMI-1830218.
}
    \thanks{$^{1}$Alp Aydinoglu and Victor M. Preciado are with the Department of Electrical and Systems Engineering, and Michael Posa is with the General Robotics, Automation, Sensing and Perception (GRASP) Laboratory, University of Pennsylvania, Philadelphia, PA 19104, USA.
    {\tt\small \{alpayd, preciado, posa\}@seas.upenn.edu}}%
}
\begin{document}
\maketitle
\thispagestyle{empty}
\pagestyle{empty}


\begin{abstract}
While many robotic tasks, like manipulation and locomotion, are fundamentally based in making and breaking contact with the environment, state-of-the-art control policies struggle to deal with the hybrid nature of multi-contact motion.
Such controllers often rely heavily upon heuristics or, due to the combinatoric structure in the dynamics, are unsuitable for real-time control.
Principled deployment of tactile sensors offers a promising mechanism for stable and robust control, but modern approaches often use this data in an ad hoc manner, for instance to guide guarded moves.
In this work, by exploiting the complementarity structure of contact dynamics, we propose a control framework which can close the loop on rich, tactile sensors.
Critically, this framework is non-combinatoric, enabling optimization algorithms to automatically synthesize provably stable control policies.
We demonstrate this approach on three different underactuated, multi-contact robotics problems.
\end{abstract}

\section{Introduction}
In recent years, robotic automation has excelled in dealing with repetitive tasks in static and structured environments.
On the other hand, to achieve the promise of the field, robots must perform efficiently in complex, unstructured environments which involve physical interaction between the robot and the environment itself.
Furthermore, as compared with traditional motion planning problems, tasks like dexterous manipulation and legged locomotion fundamentally require intentionally initiating contact with the environment to achieve a positive result.
To enable stable, and robust motion, it is critically important to design policies that explicitly consider the interaction between robot and environment.

Contact, however, is hybrid or multi-modal in nature, capturing the effect of stick-slip transitions or making and breaking contact.
Standard approaches to control often match the hybrid dynamics with a hybrid or switching controller, where one policy is associated with each mode.
However, precise identification of the hybrid events is difficult in practice, and switching controllers can be brittle, particularly local to the switching surface, or require significant hand-tuning.
Model predictive control, closely related to this work, is one approach that has been regularly applied to control through contact, with notable successes.
Due to the computational complexity of hybrid model predictive control, these approaches must either approximate the hybrid dynamics (e.g. \cite{Tassa10}), limit online control to a known mode sequence \cite{Hogan2017}, or are unable to perform in real-time \cite{marcucci2017approximate}.
While prior work has explored computational synthesis of non-switching feedback policies \cite{posa2015stability}, it does not incorporate tactile sensing.
However, there are clear structural limits to smooth, state-based control.
Here, we focus on offline synthesis of a stabilizing feedback policy, eliminating the need for intensive online calculations.

The need for contact-aware control is driven, in part, by recent advances in tactile sensing (e.g \cite{wettels2009multi, guggenheim2017robust, guggenheim2017robust, yuan2015measurement, kumar2016optimal, donlon2018gelslim} and others).
Given these advances, there has been ongoing research to design control policies using tactile feedback for tasks that require making and breaking contact. 
However, these approaches are largely based on static assumptions, for instance with guarded moves \cite{howe1993tactile}, or rely upon switching controllers (e.g. \cite{romano2011human, yamaguchi2016combining}).
Other recent methods  incorporate tactile sensors within deep learning frameworks, though offer no guarantees on performance or stability  \cite{merzic2018leveraging, tian2019manipulation}.

In this work, we present an optimization-based numerical approach for designing control policies that use feedback based on the contact forces. The control policy combines regular state feedback with tactile feedback in order to provably stabilize the system.
Our controller design is non-combinatoric in nature and avoids enumerating the exponential number of potential hybrid modes that might arise from contact.
More precisely, we design a piecewise affine controller where the contributions of each contact are additive (rather than combinatoric) in nature.
Inspired by both prior work \cite{posa2015stability} and \cite{camlibel2001complementarity}, we synthesize and verify a corresponding non-smooth, piecewise quadratic Lyapunov function.
Furthermore, we also consider scenarios where full state information might be lacking, such as when vision of an object is occluded but tactile information is available.
To address, this, we synthesize output feedback controllers, here expressed via specific sparsity patterns in the state feedback matrix.

The primary contribution of this paper is an algorithm for synthesis of a control policy, utilizing state and force feedback, which is provably stabilizing even during contact mode transitions.
We choose a structure for controller and Lyapunov function designed specifically to leverage the complementarity structure of contact, enabling scaling to multi-contact control.
This problem is formulated and solved as a bilinear matrix inequality (BMI).
\section{Background}
We now introduce the notation that is used throughout this work. The directional derivative of a function $z(x)$ in the direction $d \in \mathbb{R}^n$ is given as follows,
\begin{equation*}
    z'(x;d) = \lim_{\tau \downarrow  0} \frac{z (x + \tau d) - z(x)}{\tau}.
\end{equation*}
For a positive integer $l$, $\bar{l}$ denotes the set $\{ 1, 2, \ldots, l \}$. Given a matrix $M \in \mathbb{R}^{k \times l}$ and two subsets $I \subseteq \bar{k}$ and $J \subseteq \bar{l}$, we define $M_{I J} = (m_{i j})_{i \in I, j \in J}$. For the case where $J = \bar{l}$, we use the shorthand notation $M_{I \bullet}$. For two vectors $a \in \mathbb{R}^m$ and $b \in \mathbb{R}^m$, we use the notation $0 \leq a \perp b \geq 0$ to denote that $a \geq 0, \; b \geq 0, \; a^T b = 0$.

\subsection{Linear Complementarity Systems}
A standard approach to modeling robotic systems is through the framework of rigid-body systems with contacts. The continuous time dynamics can be modeled by the manipulator equations
\begin{equation}
\label{eq:manipulator}
    M(q) \dot{v} + C(q,v) = Bu + J(q)^T \lambda,
\end{equation}
where $q \in \mathbb{R}^{p}$ represents the generalized coordinates, $v \in \mathbb{R}^{p}$ represents the generalized velocities, $\lambda \in \mathbb{R}^m$ represents the contact forces, $M(q)$ is the inertia matrix, $C(q,v)$ represents the combined Coriolis and gravitational terms, $B$ maps the control inputs $u$ into joint coordinates and $J(q)$ is the projection matrix. 

The model \eqref{eq:manipulator} is a hybrid dynamical system \cite{alur2000discrete}, \cite{branicky1998unified} with $2^m$ modes that arise from distinct combinations of contacts. 
One approach to contact dynamics describes the forces using the complementarity framework where the generalized coordinates $q$ and contact forces $\lambda$ satisfy a set of complementarity constraints \cite{cottle2009linear}:
\begin{equation}
\label{eq:complementarity_constraints}
    \lambda \geq 0, \; \phi(q,\lambda) \geq 0, \; \phi(q,\lambda)^T \lambda = 0,
\end{equation}
where the function $\phi:\mathbb{R}^p \times \mathbb{R}^m \rightarrow \mathbb{R}^m $ is a gap function which relates the distance between robot and object with the contact force. 
This complementarity framework is widespread within the robotics community and has been commonly used to simulate contact dynamics \cite{anitescu1997formulating, stewart2000implicit}, quasi-statics \cite{halm2019quasi}, leveraged in trajectory optimization \cite{posa2014direct}, stability \cite{haas2016distinction} and control \cite{posa2015stability} of rigid-body systems with contacts. Additionally, linear complementarity systems \cite{heemels2000linear} \cite{mayne2001control} capture the local behavior of \eqref{eq:manipulator} with the constraints \eqref{eq:complementarity_constraints}.
A linear complementarity system is characterized by the following five matrices: $A \in \mathbb{R}^{n \times n}$, $B \in \mathbb{R}^{n \times k}$, $D \in \mathbb{R}^{n \times m}$, $E \in \mathbb{R}^{m \times n}$, and $F \in \mathbb{R}^{m \times m}$ in the following way:
\begin{definition}(Linear Complementarity System)
\label{definition_lcs}
An LCS describes the evolution of the state trajectory $x = x(t) \in \mathbb{R}^n$ that are dependent on the contact forces $\lambda = \lambda(t) \in \mathbb{R}^m$ such that
\begin{equation}
\label{eq:LCS}
  \begin{aligned}
    & \quad \dot{x} = Ax+Bu+D\lambda,\\
    & 0 \leq \lambda \perp Ex +  F \lambda + c \geq 0,
  \end{aligned}
\end{equation}
where $u \in \mathbb{R}^k$ is the input vector, $A$ determines the autonomous dynamics of the state vector $x$, $B$ models the linear effect of the input on the state, $D$ describes the linear effect of the contact forces on the state.
\end{definition}
Note that the contact forces $\lambda$ are always non-negative. \eqref{eq:LCS} implies that either $\lambda = 0$ or $Ex + F \lambda + c = 0$, encoding the multi-modal dynamics of contact.

In many robotic scenarios, the hybrid, nonlinear dynamics of \eqref{eq:manipulator} can be locally approximated as a piecewise linear dynamical system, where each piece corresponds to a hybrid mode.
We note that the smooth components of the dynamics ($M(q),C(q,v),J(q), \phi(q)$, etc.) are linearized, but the LCS maintains the non-smooth nature of the original dynamics.
An LCS is a compact representation, as the variables and constraints scale linearly with $m$, rather than with the $2^m$ hybrid modes \cite{camlibel2001complementarity}, \cite{lin2009stability}.

\subsection{Linear Complementarity Problem}
Here, we recall some definitions and results from the theory of linear complementarity problems \cite{cottle2009linear}.
\begin{definition}(Linear Complementarity Problem)
Given $F \in \mathbb{R}^{m \times m}$ and a vector $w \in \mathbb{R}^m$, the $LCP(w,F)$  describes the following mathematical program:
\begin{alignat}{2}
\label{LCS_definiton}
\notag & \underset{}{\text{find}} && \lambda \in \mathbb{R}^m \\
& \text{subject to}  \quad && 0 \leq \lambda \perp F \lambda + w \geq 0.
\end{alignat}
\end{definition}
For a given $F$ and $w$, the LCP may have multiple solutions or none at all. Hence, we denote the solution set of the linear complementarity problem $\text{LCP}(w,F)$ as $\text{SOL}(w,F)$. We will restrict ourselves to a particular class of LCPs that are guaranteed to have unique solutions.
Note that without this restriction, contact dynamics are known to generate non-unique solutions \cite{Stewart2000, Halm2019}.
\begin{definition}(P-Matrix)
A matrix $F \in \mathbb{R}^{m \times m}$ is a P-matrix, if the determinant of all of its principal sub-matrices are positive; that is, $\text{det}(F_{\alpha \alpha}) \geq 0$ for all $\alpha \subseteq \{ 1, \ldots, m \} $.
\end{definition}
If $F$ is a P-matrix, then the solution set $\text{SOL}(w,F)$ is a singleton for any $w \in \mathbb{R}^m$ \cite{shen2005linear}. 
%
If we denote the unique element of $\text{SOL}(w,F)$ as $\psi(w)$, then $\psi(w)$ is a piecewise linear function in $w \in \mathbb{R}^m$, hence is Lipschitz continuous and directionally differentiable \cite{shen2005linear}.

We can describe the contact forces in \eqref{eq:LCS} as the solution to the $\text{LCP}(Ex + c,F)$, and denote $\lambda(x)$ as the unique element of $\text{SOL}(Ex + c, F)$. Therefore, $\lambda(x)$ is globally Lipschitz continuous and directionally differentiable \cite{camlibel2006lyapunov}. We denote the directional derivative of $\lambda$ in the direction $d$ as $\lambda'(x;d)$.
We now define the following sets that will be used throughout the work.
\begin{equation*}
    \Gamma_{\text{SOL}}(E,F,c) = \{ (x,\lambda) : \lambda \in SOL(Ex+c,F) \},
\end{equation*}
describes the graph of $\lambda(x)$.
\begin{equation*}
    \Gamma'_{\text{SOL}}(E,F,c,d) = \{ (x,\lambda,\lambda'(x;d)) : \lambda \in SOL(Ex+c,F) \},
\end{equation*}
describes the graph of $\Psi(x) = \begin{pmatrix} \lambda(x) \\ \lambda'(x;d) \end{pmatrix}$.

In the special case where $c=0$, we can describe $\lambda(x)$ and $\lambda(x;d)$ more precisely. We first define three index sets
\begin{equation*}
    \alpha(x) = \{ i : \lambda_i (x) > 0 = ( Ex + F \lambda (x) )_i \},
\end{equation*}
\begin{equation*}
    \beta(x) = \{ i : \lambda_i (x) = 0 = ( Ex + F \lambda (x) )_i \},
\end{equation*}
\begin{equation*}
    \gamma(x) = \{ i : \lambda_i (x) = 0 < ( Ex + F \lambda (x) )_i \},
\end{equation*}
where $\alpha$ indicates active contacts, $\gamma$ inactive contacts, and $\beta$ contacts that are inactive, but on the constraint boundary.
Using these index sets, it follows that the contact force $\lambda \in \mathbb{R}^m$ is equivalent to,
\begin{equation}
\label{eq:contact_force}
    \lambda_{\alpha}(x) = - ( F_{\alpha \alpha} )^{-1} E_{\alpha \bullet} x, \; \; \lambda_{\hat{\alpha}} (x) = 0,
\end{equation}
where $\alpha = \alpha(x)$ and $\hat{\alpha} = \beta(x) \cup \gamma(x)$. 
Similarly, there exists a subset $\beta_d \subseteq \beta(x)$ such that the directional derivative of the contact force $\lambda'(x;d)$ in the direction $d$ is given by \cite{camlibel2006lyapunov}
\begin{equation}
\label{eq:contact_force_derivative}
    \lambda'_{\alpha_d}(x;d) = - ( F_{\alpha_d \alpha_d} )^{-1} E_{\alpha_d \bullet} d, \; \; \lambda'_{\hat{\alpha}_d} (x) = 0,
\end{equation}
where $\alpha_d = \alpha(x) \cup \beta_d$ and $\hat{\alpha}_d = \bar{m} \setminus \alpha_d$.
Note the inclusion of elements from $\beta(x)$ in the directional derivative, as contacts in $\beta(x)$ might instantaneously become active.

Under the assumption that $F$ is a P-matrix, we now represent an LCS in a more compact manner.
The linear complementarity system in \eqref{eq:LCS} is equivalent to the dynamical system
\begin{align}
\label{eq:system_rep}
    &\dot{x} = Ax + Bu + D \lambda(x),
\end{align}
where $\lambda(x)$ corresponds to the unique element of $\text{SOL}(Ex+c,F)$ for every state vector $x$. Notice that \eqref{eq:system_rep} is only a more compact representation of \eqref{LCS_definiton} and still has the same structure as the LCS.

\section{Conditions for Stabilization}
In this section, we use the complementarity formulation of contact in order to construct conditions for stability in the sense of Lyapunov and propose controller design methods based on these conditions. Unlike the common approach of designing controllers only using state feedback (i.e., $u = u(x)$), we consider controllers of the form $u = u(x,\lambda)$ where the feedback is dependent both on the state $x$ and the contact force $\lambda$. Additionally, we exploit the structure of the complementarity system so our controller design avoids combinatorial mode enumeration.

To achieve this, we consider the input vector $u(x,\lambda) = Kx + L \lambda$, where $K \in \mathbb{R}^{k \times n}$ and $L \in \mathbb{R}^{k \times m}$ are feedback gain matrices on state and force, respectively. Note that both the controller $u(x,\lambda) = K x + L \lambda$ and the linear complementarity system \eqref{eq:LCS} are globally Lipschitz continuous under the P-matrix assumption.
Since the LCS in \eqref{eq:LCS} is Lipschitz continuous, it has unique solutions for any initial condition \cite{khalil2002nonlinear}. 

Given that solutions are unique, we can adopt standard notions  of stability for differential equations where the right-hand side is Lipschitz continuous, though possibly non-smooth \cite{camlibel2006lyapunov}, \cite{khalil2002nonlinear}. 

We now construct conditions for guaranteed stabilization. First, we consider the Lyapunov function introduced in \cite{camlibel2006lyapunov},
\begin{equation}
\label{eq:lyap_func}
    V(x,\lambda) = \begin{bmatrix} x^T & \lambda^T \end{bmatrix} \begin{bmatrix} P & Q \\ Q^T & R \end{bmatrix} \begin{bmatrix} x \\ \lambda \end{bmatrix},
\end{equation}
where $P \in \mathbb{R}^{n \times n}$, $Q \in \mathbb{R}^{n \times m}$,  and $R \in \mathbb{R}^{m \times m}$. Observe that the Lyapunov function \eqref{eq:lyap_func} is quadratic in terms of the pair $(x,\lambda)$, and piecewise quadratic in the state variable $x$ since the contact forces are a function of $x$ (i.e., $\lambda = \lambda(x)$). More precisely, $V$ is a piecewise quadratic Lyapunov function in $x$ where the function switches based on which contacts are active. For example, if all of the contact forces are inactive, $\lambda = 0$, then $V(x) = x^T P x$.

In Lyapunov based analysis and synthesis methods, one desires to search over a wide class of functions. Since $V(x,\lambda)$ is a piecewise quadratic Lyapunov function, it is more expressive than a Lyapunov function common to all modes (as was used in \cite{posa2015stability}), which makes it a more powerful choice than a single quadratic Lyapunov function \cite{johansson1997computation}. Additionally, observe that the $\lambda = \lambda(x)$ is a piecewise continuous function of the state vector $x$, and the function $V$ in \eqref{eq:lyap_func} is non-smooth.

Since both the function $V$ in \eqref{eq:lyap_func} and the contact force $\lambda$ are non-smooth, we work with directional derivatives in the direction $\dot{x}$ where $\dot{x} = Ax + Bu + D\lambda$. For simplicity, we denote $\lambda'(x;\dot{x}) = \bar{\lambda}(x)$. Now, we are ready to construct conditions for certifiably stabilizing gains $K$ and $L$.
\begin{theorem}
\label{theorem_stability}
Consider the linear complementarity system in \eqref{eq:LCS}. Assume that $F$ is a P-matrix and $c \geq 0$, then the input $u(x,\lambda) = Kx + L \lambda$ is a continuous function of $x$ and the LCS in \eqref{eq:LCS} is Lipschitz continuous. Furthermore, consider the matrices
\begin{equation*}
    M = \begin{bmatrix} P & Q \\ Q^T & R \end{bmatrix}, \;  N = \begin{bmatrix} N_{11} & N_{12} & Q \\ N_{12}^T & N_{22} & R \\ Q^T & R & 0 \end{bmatrix},
\end{equation*}
where $P \in \mathbb{R}^{n \times n}$, $Q \in \mathbb{R}^{n \times m}$, $R \in \mathbb{R}^{m \times m}$, and,
\begin{equation*}
    N_{11} = A^T P + PA + K^T B^T P + P B K,
\end{equation*}
\begin{equation*}
    N_{12} = P B L + PD + A^T Q + K^T B^T Q,
\end{equation*}
\begin{equation*}
    N_{22} = L^T B^T Q + D^T Q + Q^T D + Q^T B L.
\end{equation*}
If there exists $K$, $L$, $P$, $Q$, $R$ and a domain $\mathcal{D} \subseteq \mathbb{R}^n$ such that $V(x,\lambda) > 0$ for $(x, \lambda) \in \Gamma_{\text{SOL}}(E,F,c)$, $x \in \mathcal{D}$, then
\begin{enumerate}
	\item $x_e = 0$ is Lyapunov stable if $\hat{V}'(x;\dot{x}) \leq 0$ for \\ $(x, \lambda, \Bar{\lambda}) \in \Gamma'_{\text{SOL}}(E,F,c,\dot{x})$, $x \in \mathcal{D}$,
	\item $x_e = 0$ is asymptotically stable if $\hat{V}'(x;\dot{x})< 0$ for $(x, \lambda, \Bar{\lambda}) \in \Gamma'_{\text{SOL}}(E,F,c,\dot{x})$, $x \in \mathcal{D}$,
\end{enumerate}
where $z = [x \; \; \lambda \; \; \bar{\lambda}]^T$, $g = [x \; \; \lambda]^T$, $\dot{x} = Ax + Bu + D\lambda$, $V(x,\lambda) = g^T M g$, and $\hat{V}'(x;\dot{x}) = z^T N z$.
\end{theorem}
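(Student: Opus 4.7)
The plan is to reduce the theorem to a standard non-smooth Lyapunov stability argument applied to the composite function $W(x) := V(x, \lambda(x))$, viewed as a function of $x$ alone. The preliminary structural claims are essentially immediate: because $F$ is a P-matrix, the map $x \mapsto \lambda(x)$ is globally Lipschitz and piecewise linear (already cited), so $u(x, \lambda(x)) = Kx + L\lambda(x)$ is Lipschitz in $x$ and the closed-loop field $\dot{x} = (A + BK)x + (BL + D)\lambda(x)$ is Lipschitz as well, guaranteeing unique forward solutions. The hypothesis $c \geq 0$ is what makes $x_e = 0$ an equilibrium of the closed loop: at $x = 0$ the LCP reduces to $0 \leq \lambda \perp F\lambda + c \geq 0$, which admits $\lambda = 0$, and P-matrix uniqueness makes it the only solution, so $\lambda(0) = 0$ and $\dot{x}|_{x=0} = 0$.

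The core algebraic step is to show that the form $z^T N z$ equals the directional derivative of $W$ along the closed-loop flow. Since $V$ is quadratic in $(x,\lambda)$ and $\lambda(x)$ is directionally differentiable with derivative $\bar{\lambda}(x)$, the chain rule for directional derivatives yields
\begin{equation*}
W'(x; \dot{x}) = 2 x^T P \dot{x} + 2 \dot{x}^T Q \lambda + 2 x^T Q \bar{\lambda} + 2 \lambda^T R \bar{\lambda}.
\end{equation*}
Substituting $\dot{x} = (A + BK) x + (BL + D) \lambda$ and collecting the purely $xx$, $\lambda\lambda$, and $x\lambda$ terms reproduces exactly the blocks $N_{11}$, $N_{22}$, and $N_{12}$; the remaining terms involving $\bar{\lambda}$ recover the $Q$ and $R$ entries in the last row and column of $N$. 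This is a direct matching of coefficients.

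With the identity $W'(x;\dot{x}) = z^T N z$ in hand, the two conclusions follow from the standard Lyapunov theorem for Lipschitz but possibly non-smooth vector fields developed in \cite{camlibel2006lyapunov} and \cite{khalil2002nonlinear}. The hypothesis $V(x,\lambda) > 0$ on $\Gamma_{\text{SOL}}(E,F,c)$ with $x \in \mathcal{D}$ translates to $W(x) > 0$ on $\mathcal{D} \setminus \{0\}$ (with $W(0) = 0$ since $V$ is a quadratic form and $\lambda(0) = 0$), so $W$ is a valid Lyapunov candidate. Then $\hat V'(x;\dot x) \leq 0$ on $\Gamma'_{\text{SOL}}$ says $W(x(t))$ is non-increasing along trajectories, giving Lyapunov stability in case~(1), and strict negativity gives asymptotic stability in case~(2) by the usual argument.

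The step I expect to require the most care is justifying that the pointwise chain-rule expression for $W'(x;\dot x)$ genuinely coincides with the one-sided time derivative $\tfrac{d^+}{dt} W(x(t))$ along trajectories. The subtlety lies in the mode-dependent index set $\beta_d \subseteq \beta(x)$ appearing in the formula for $\lambda'(x;\dot x)$: the contacts that instantaneously become active depend on $\dot x$, which itself depends on $\lambda(x)$, so the directional derivative of $\lambda$ must be consistent with the direction it helps define. Verifying this, using the semi-smoothness of $\lambda(\cdot)$ from \cite{camlibel2006lyapunov}, is what makes the single matrix $N$ correctly certify stability across every mode without enumerating the $2^m$ hybrid combinations; it is also the only place where the non-smoothness bites, so I would spend the bulk of the write-up there.
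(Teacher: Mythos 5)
Your proposal is correct and follows essentially the same route as the paper's own proof: form the composite function $V(x,\lambda(x))$, compute its directional derivative along $\dot{x}=(A+BK)x+(D+BL)\lambda$ via the chain rule, match coefficients to obtain $z^TNz$, and invoke standard non-smooth Lyapunov theory. You are in fact more careful than the paper on two points it leaves implicit --- that $c\geq 0$ with the P-matrix property forces $\lambda(0)=0$ so that the origin is an equilibrium, and that the pointwise directional derivative must agree with the one-sided time derivative along trajectories --- but these are elaborations of, not departures from, the same argument.
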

\begin{proof}Consider the Lyapunov function in \eqref{eq:lyap_func},
\begin{equation*}
    V(x,\lambda) = x^T P x + 2 x^T Q \lambda + \lambda^T R \lambda.
\end{equation*}
Then the composite function $\bar{V}(x)$ is locally Lipschitz continuous and directionally differentiable,
\begin{equation*}
    \bar{V}(x) = x^T P x + 2 x^T Q \lambda(x) + \lambda(x)^T R \lambda(x).
\end{equation*}
The directional derivative of $\bar{V}$ in the direction $\dot{x}$ can be shown as follows,
\begin{align*}
    \hat{V}'(x;\dot{x}) = 2 x^T P \dot{x} + &2 \dot{x}^T Q \lambda(x) + 2 x^T Q \bar{\lambda}(x)\\
     &+ 2 \lambda(x)^T R \bar{\lambda}(x),
\end{align*}
where $\dot{x} = (A+BK)x + (D+BL)\lambda$. After algebraic manipulation,
\begin{equation*}
    \hat{V}'(x;\dot{x}) = z^T N z,
\end{equation*}
and (a) and (b) follows.
\end{proof}

We have established sufficient conditions to stabilize the LCS in \eqref{eq:LCS}. Furthermore, observe that $V$ is a piecewise quadratic function, $u$ is a piecewise affine function and both of them switch based on active contacts. More precisely, $V$ and $u$ encode the non-smoothness of the problem structure, mirroring the structure of the LCS, and allow tactile feedback, without exponential enumeration. This is an appealing middle ground between the common Lyapunov function of our prior work \cite{posa2015stability}, and purely hybrid approaches \cite{Papachristodoulou09a, marcucciwarm}. We can assign a different Lyapunov function and a control policy for each mode but avoid mode enumeration so that the approach can scale to large number of contacts ($m$).


The sufficient conditions in Theorem \ref{theorem_stability} are matrix inequalities over the sets $\Gamma_{\text{SOL}}(E,F,c)$ and $\Gamma'_{\text{SOL}}(E,F,c,\dot{x})$. The challenge, then is to find representations of these sets suitable for control synthesis, without exponential enumeration of the hybrid modes. The set $\Gamma_{\text{SOL}}(E,F,c)$ can be explicitly written as:
\begin{equation*}
	\Gamma_{\text{SOL}}(E,F,c) = \{ (x,\lambda) : 0 \leq \lambda \perp Ex + c + F \lambda \geq 0 \}.
\end{equation*}
For the set $\Gamma'_{\text{SOL}}(E,F,c,\dot{x})$, we propose two conservative representations.
The challenge in representing $\Gamma'$ comes from properly bounding the directional derivative of force, $\bar \lambda$ without explicitly computing it (which would be combinatoric).
First, we consider the case where the constant term is set to zero ($c = 0$) and define the set
\begin{align}
\label{eq:sol_graph_derivative}
    \notag \Gamma'_{\text{SOL(1)}}(E,F,0,\dot{x}) &= \{ (x,\lambda,\bar{\lambda}) : 0 \leq F \lambda + Ex \perp \lambda \geq 0, \\
    & \quad ||\lambda||_2 \leq \gamma ||x||_2, ||\bar{\lambda}||_2 \leq \kappa ||x||_2 \},
\end{align}
where $\gamma$ and $\kappa$ are system-specific bounds on $\lambda$ and $\bar \lambda$, such that $\Gamma'_{\text{SOL}}(E,F,0,\dot{x}) \subseteq \Gamma'_{\text{SOL(1)}}(E,F,0,\dot{x})$.
The parameter $\gamma$ can be computed from \eqref{eq:contact_force}:
\begin{equation*}
    \gamma = \max_\alpha ||-( F_{\alpha \alpha} )^{-1} E_{\alpha \bullet}||_2.
\end{equation*}
Similarly, we can compute $\kappa$ using \eqref{eq:contact_force_derivative},
\begin{equation*}
    ||\bar{\lambda}||_2 \leq \gamma ||\dot{x}||_2 \leq \kappa ||x||_2,
\end{equation*}
with $\kappa = \gamma \big( ||A+BK||_2 + \gamma ||D+BL||_2 \big)$. 
In order to obtain bounds on $||A+BK||_2$ and $||D+BL||_2$, without yet knowing $K$ and $L$, we use the following proposition:
\begin{proposition}\cite{boyd2004convex}
\label{singular_value_bound}
The constraint $\sigma_{\text{max}}(A) \leq \Theta$ can be written as a convex linear matrix inequality. Let
\begin{equation*}
    C^{\Theta} = \{ A :  \begin{bmatrix} \Theta I & A \\ A^T & \Theta I \end{bmatrix} \succeq 0 \}.
\end{equation*}
Then, $A \in C^{\Theta} \Leftrightarrow A^T A \preceq \Theta^2 I \Leftrightarrow \sigma_{\text{max}}(A) \leq \Theta$.
\end{proposition}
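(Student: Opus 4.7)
The plan is to prove the two equivalences in the proposition separately, since the result is a chain of three conditions. I would first tackle $A^TA \preceq \Theta^2 I \Leftrightarrow \sigma_{\max}(A) \leq \Theta$, then the LMI characterization $A \in C^\Theta \Leftrightarrow A^TA \preceq \Theta^2 I$.

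For the first equivalence, I would invoke the variational/spectral characterization of the maximum singular value, namely $\sigma_{\max}(A)^2 = \lambda_{\max}(A^TA)$, which follows from the SVD of $A$. Since $A^TA$ is symmetric positive semidefinite, its eigenvalues are non-negative, and the matrix inequality $A^TA \preceq \Theta^2 I$ is equivalent to $\lambda_{\max}(A^TA) \leq \Theta^2$. Taking square roots (using $\Theta \geq 0$) yields $\sigma_{\max}(A) \leq \Theta$, closing the first equivalence.

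For the LMI equivalence, the key tool is the Schur complement. Assuming $\Theta > 0$, the block $(1,1)$ entry $\Theta I$ is positive definite and invertible, so the block matrix $\begin{bmatrix} \Theta I & A \\ A^T & \Theta I \end{bmatrix}$ is positive semidefinite if and only if its Schur complement with respect to $\Theta I$ is positive semidefinite, i.e., $\Theta I - A^T (\Theta I)^{-1} A = \Theta I - \tfrac{1}{\Theta} A^T A \succeq 0$. Multiplying by $\Theta > 0$ gives exactly $\Theta^2 I \succeq A^TA$, which is the desired characterization.

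The one step that requires slight care is the boundary case $\Theta = 0$, where Schur complement is not directly applicable because $\Theta I$ is singular. Here the block matrix degenerates to $\begin{bmatrix} 0 & A \\ A^T & 0 \end{bmatrix}$, and PSD of this matrix forces $A = 0$ (by testing vectors of the form $(v, \pm v/\|v\|)$, or simply observing that the diagonal must dominate the off-diagonal); simultaneously $A^TA \preceq 0$ also forces $A = 0$, so both conditions coincide with $\sigma_{\max}(A) \leq 0$. This is the only subtle point; the main body of the argument is a direct Schur complement computation and the SVD-based identity $\sigma_{\max}(A)^2 = \lambda_{\max}(A^TA)$. Convexity of the LMI representation is then immediate since the block matrix is affine in $A$.
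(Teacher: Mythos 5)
Your proof is correct. The paper does not actually prove this proposition---it is cited directly from Boyd and Vandenberghe \cite{boyd2004convex}---and your argument (the identity $\sigma_{\max}(A)^2 = \lambda_{\max}(A^TA)$ for the first equivalence, the Schur complement of the $\Theta I$ block for the LMI equivalence, plus the degenerate case $\Theta = 0$ handled separately) is exactly the standard derivation found in that reference. Your attention to the $\Theta=0$ boundary case, where the Schur complement argument does not directly apply, is a nice touch that the cited source also has to address; nothing is missing.
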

Using Proposition \ref{singular_value_bound} with the equality $||A||_2 = \sigma_{\text{max}}(A)$, and given bounds on $K$ and $L$, we can obtain upper bounds on $||A+BK||_2$ and $||D+BL||_2$.
However, we note that this approach, while practically useful, requires enumerating all of the modes.
Hence, we propose a second approach that both avoids enumeration and addresses cases where the constant term is strictly positive ($c > 0$).
\begin{proposition}
\label{constraints_s_procedure}
Assume that $F$ is a P-matrix. If $\lambda_i > 0$, then $E_i^T \dot{x} + F_i^T \bar{\lambda} = 0$ for all $i$. Furthermore, using a slack variable $\rho \in \mathbb{R}^m$, this relation is captured by the constraints,
\begin{equation}
\label{eq:rho_lambda}
    \lambda_i \rho_i = 0 \; \; \text{for all} \; i \in \bar{m},
\end{equation}
\begin{equation}
\label{eq:rho_differentiation}
    E \dot{x} + F \bar{\lambda} + \rho = 0.
\end{equation}
\end{proposition}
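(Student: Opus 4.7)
The plan is to prove the active-constraint identity by a direct continuity argument, then obtain the slack-variable reformulation by pure algebra.

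For the first claim, fix an index $i$ and suppose $\lambda_i(x) > 0$ at some state $x$. Since $F$ is a P-matrix, $\lambda(\cdot)$ is globally Lipschitz (indeed piecewise linear in $x$), so $\lambda_i$ remains strictly positive on an open neighborhood of $x$. On that neighborhood the complementarity constraint $0 \leq \lambda \perp Ex + F\lambda + c \geq 0$ forces
\begin{equation*}
    (E\xi + F\lambda(\xi) + c)_i = 0 \quad \text{for all } \xi \text{ near } x.
\end{equation*}
This identity is preserved by directional differentiation along any direction, and in particular along $\dot{x}$: using $\lambda'(\cdot\,;\dot{x}) = \bar{\lambda}$ yields $(E\dot{x} + F\bar{\lambda})_i = 0$, which in row-wise notation is $E_i^T \dot{x} + F_i^T \bar{\lambda} = 0$.

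For the reformulation, define $\rho \in \mathbb{R}^m$ by $\rho := -(E\dot{x} + F\bar{\lambda})$, which is exactly the linear relation stated in the proposition. The bilinear constraint $\lambda_i \rho_i = 0$ is automatic when $\lambda_i = 0$, and whenever $\lambda_i > 0$ it forces $\rho_i = 0$, i.e.\ $(E\dot{x} + F\bar{\lambda})_i = 0$, which is exactly the implication just proved. Thus the pair of constraints captures the active-set identity without any mode enumeration.

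The only non-algebraic step is the continuity argument that a strictly active index remains active in a neighborhood; this relies entirely on the P-matrix hypothesis already in force and the continuity result cited before the theorem. I do not anticipate any serious obstacle: the slack $\rho$ plays the role of a residual in the differentiated complementarity, and the construction trades an explicit formula for $\bar{\lambda}$ (which, by \eqref{eq:contact_force_derivative}, would require enumerating the subsets $\beta_d$ of $\beta(x)$) for a mode-free pair of bilinear/linear constraints suitable for inclusion in the BMI synthesis. One subtlety worth flagging, but not required for the proposition itself, is that this encoding is only a necessary condition on $\bar{\lambda}$: it constrains the rows indexed by $\alpha(x)$ but leaves the rows in $\beta(x)$ unconstrained, so the resulting representation is a conservative over-approximation of $\Gamma'_{\text{SOL}}$, which is precisely what Theorem \ref{theorem_stability} needs.
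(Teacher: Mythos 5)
Your proposal is correct and follows essentially the same route as the paper: observe that $\lambda_i>0$ forces the $i$-th constraint row $E_i^Tx+F_i^T\lambda+c=0$ to be active, differentiate that identity along the flow to get $E_i^T\dot{x}+F_i^T\bar{\lambda}=0$, and then note that the slack $\rho$ together with $\lambda_i\rho_i=0$ encodes this implication row by row. If anything, your version is slightly more careful than the paper's one-line ``differentiate both sides'' step, since you justify the differentiation by the continuity/neighborhood argument guaranteed by the P-matrix hypothesis.
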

\begin{proof}
Observe that if $\lambda_i > 0$, then
\begin{equation*}
    E_i^T x + F_i^T \lambda + c = 0.
\end{equation*}
$E_i^T \dot{x} + F_i^T \bar{\lambda} = 0$ follows after differentiation with respect to time on both sides of the equation. If $\lambda_i > 0$, then $\rho_i = 0$ from \eqref{eq:rho_lambda}, and \eqref{eq:rho_differentiation} is equal to $E_i^T \dot{x} + F_i^T \bar{\lambda} = 0$. If $\lambda_i = 0$, then \eqref{eq:rho_differentiation} holds trivially since $\rho_i$ is a free variable.
\end{proof}

We define the following set based on Proposition \ref{constraints_s_procedure}:
\begin{align}
\label{eq:sol_graph_derivative_2}
    \notag\Gamma'_{\text{SOL(2)}}&(E,F,c,\dot{x}) = \{ (x,\lambda,\bar{\lambda}) : 0 \leq F \lambda + Ex + c \perp \lambda \geq 0, \\
    & E \dot{x} + F \bar{\lambda} + \rho = 0, \; \lambda_i \rho_i = 0 \; \; \forall \; i \in \bar{m} \}.
\end{align}
Notice that $\Gamma'_{\text{SOL}}(E,F,c,\dot{x}) \subseteq \Gamma'_{\text{SOL(2)}}(E,F,c,\dot{x})$. Since we have defined the sets $\Gamma_{\text{SOL}}(E,F,c)$ and $\Gamma'_{\text{SOL(i)}}(E,F,c,\dot{x})$ for $i = 1,2$, we can formulate the feasibility problem:
\begin{alignat}{2}
\label{eq:feasability_LCS}
& \underset{}{\text{find}} && V(x,\lambda), K, L \\
\notag& \text{subject to}  \quad && V(0,0) = 0, \\
\notag& && V(x,\lambda) > 0, \quad \text{for} \; (x,\lambda) \in \mathcal{A}, x  \in \mathcal{D},  \\
\notag& && \hat{V}'(x;\dot{x}) \leq 0, \; \text{for} \; (x,\lambda,\bar{\lambda}) \in \mathcal{B}, x \in \mathcal{D},
\end{alignat}
where $\mathcal{A} = \Gamma_{\text{SOL}}(E,F,c)$, $\mathcal{B} = \Gamma'_{\text{SOL(i)}}(E,F,c,\dot{x})$ with $i$ either $1$ or $2$, with the functions $V(x,\lambda)$ and $\hat{V}'(x;\dot{x})$ as in Theorem \ref{theorem_stability}. Notice that $\hat{V}'(x;\dot{x}) \leq 0$ is a bilinear matrix inequality because of the bilinear terms $PBK$, $P B L$, $K^T B^T Q$ and $Q^T B L$. Therefore, we have formulated the problem of designing a control policy as finding a feasible solution for a set of bilinear matrix inequalities. 
The conditions that define $\mathcal{A}$ and $\mathcal{B}$ are incorporated via the S-procedure \cite{boyd1994linear}.

Even though the problem is a BMI, we can eliminate some of the bilinear constraints. Observe that the gain matrices $K$ and $L$ appear in the constraint $E \dot{x} + F \bar{\lambda} + \rho = 0$ in \eqref{eq:sol_graph_derivative_2}. This is undesirable in practice, as it will lead to bilinear S-procedure terms. 
To avoid this, we relax $E \dot{x} + F \bar{\lambda} + \rho = 0$ and bound $\bar \lambda$ with the inequality
\begin{equation}
\label{eq:bound_s_procedure}
||q||_2 \leq ||EBK||_2 ||x||_2 + \sigma ||EBL||_2 ||\lambda||_2 \leq b,
\end{equation}
where $q = EAx + ED\lambda + F \bar{\lambda} + \rho$. We can generate bounds on $||EBK||_2$ and $||EBL||_2$ using Proposition \ref{singular_value_bound}. In order to compute the final bound, $b$, we solve the following convex optimization problem,
\begin{alignat*}{2}
& \underset{}{\text{minimize}} && \sigma, d  \\
\notag& \text{subject to} \quad && \lambda^T \lambda \leq \sigma x^T x + d, \quad \text{for} \; (x,\lambda) \notag\in \Gamma_{\text{SOL}}(E,F,c),
\end{alignat*}
and compute $b = ||EBK||_2 \omega + ||EBL||_2 \sqrt{\gamma \omega^2 + d}$ considering the compact set $\mathcal{D} = \{ x : ||x||_2 \leq \omega \}$. Notice that \eqref{eq:bound_s_procedure} helps us avoid bilinear S-procedure terms, but does not turn the feasibility problem \eqref{eq:feasability_LCS} into a convex problem.

\section{Examples}
In this paper, we use the YALMIP \cite{Lofberg2004} toolbox with PENBMI \cite{kovcvara2003pennon} to formulate and solve bilinear matrix inequalities. SeDuMi \cite{sturm1999using} is used for solving the semidefinite programs (SDPs). PATH \cite{dirkse1995path} has been used to to solve the linear complementarity problems when performing simulations.

\subsection{Cart-Pole with Soft Walls}
We consider the cart-pole system where the goal is to balance the pole and regulate the cart to the center, where there are walls (modeled via spring contacts) on both sides.
This problem, or a slight variation of it, has been used as a benchmark in control through contact \cite{marcucciwarm}, \cite{deits2019lvis}, \cite{marcucci2017approximate} and the model is shown in Figure 1.

In our model, the $x_1$ is the position of the cart, $x_2$ is the angle of the pole, and $x_3$, $x_4$ are their time derivatives respectively. We can control the cart with the input $u_1$ and model the contact forces of the walls as $\lambda_1$ and $\lambda_2$, leading to the LCS
\begin{align*}
    \dot{x}_1& = x_3, \\
    \dot{x}_2& = x_4, \\
    \dot{x}_3 &= \frac{g m_p}{m_c} x_2 + \frac{1}{m_c} u_1,\\
    \quad \dot{x}_4 &= \frac{g (m_c + m_p)}{l m_c}  x_2 + \frac{1}{l m_c} u_1 + \frac{1}{l m_p} \lambda_1 - \frac{1}{l m_p} \lambda_2,\\
    \quad 0 &\leq \lambda_1 \perp l x_2 - x_1 + \dfrac{1}{k_1}  \lambda_1 + d \geq 0,\\
    0 &\leq \lambda_2 \perp x_1 - l x_2 + \dfrac{1}{k_2}  \lambda_2 + d \geq 0,
\end{align*}
where $k_1 = k_2 = 10$ are stiffness parameters of the soft walls, $g = 9.81$ is the gravitational acceleration, $m_p = 0.5$ is the mass of the pole, $m_c = 1$ is the mass of the cart, $l = 0.5$ is the length of the pole, and $d = 0.1$ represents where the walls are. For this model, we solve the feasibility problem \eqref{eq:feasability_LCS} and find a controller of the form $u(x,\lambda) = Kx + L \lambda$ that regulates the model to the origin. The algorithm succeeded in finding a feasible controller in 0.72 seconds. As a comparison, we also designed an LQR controller with penalty on the state $Q = 10 I$ and penalty on the input $R=1$. We tested both contact-aware and LQR controllers on the nonlinear plant for 100 initial conditions where $x_2(0)=0$, and $x_1(0), x_3(0), x_4(0)$ are uniformly distributed ($10x_1(0), x_3(0), x_4(0) \sim U[-1, 1]$). Despite the fact that the walls are not particularly stiff, LQR was successful only 81\% of the time, whereas our contact-aware policy was always successful.   

\begin{figure}[t]
	\label{fig:Cart-pole with soft walls}
	\includegraphics[width=1\columnwidth]{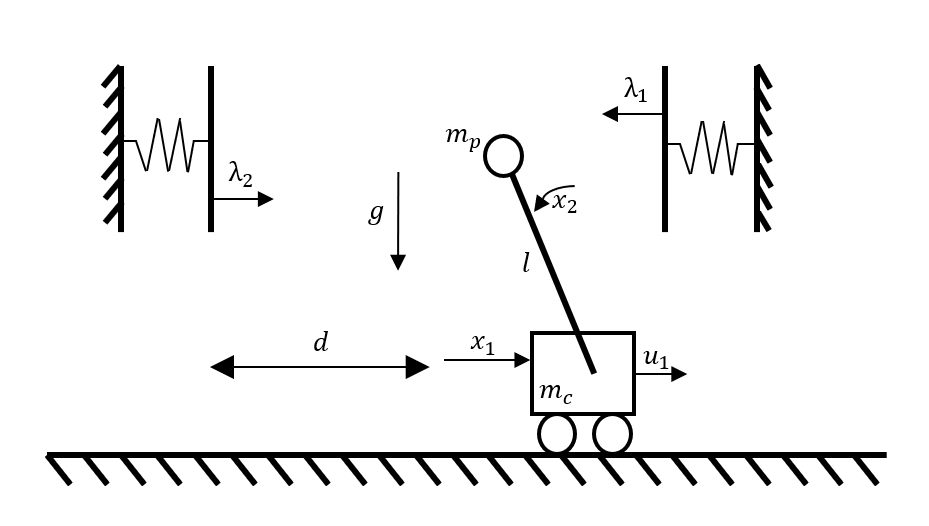}
	\caption{Benchmark problem: Regulation of the cart-pole system to the origin with soft walls.}
\end{figure}

\subsection{Partial State Feedback}

We consider a model that consists of three carts that are standing on a frictionless surface as in Figure 2. The cart on the left has a pole attached to it and the cart in the middle has two springs attached to it that represents soft contacts. In this model, a spring only becomes active if the distance between the outer block and the block in the middle is less than some threshold. Here, $x_1,x_2,x_3$ represent the positions of the carts and $x_4$ is the angle of the pole. 
The corresponding LCS is
\begin{align*}
    \ddot{x}_1 &= \frac{g m_p}{m_1} x_4 + \frac{1}{m_1} u_1 - \frac{1}{m_1} \lambda_1, \\
    \ddot{x}_2 &= \frac{\lambda_1}{m_2} - \frac{\lambda_2}{m_2}, \\
    \ddot{x}_3 &= \frac{\lambda_2}{m_3} + \frac{u_2}{m_3},\\
    \ddot{x}_4 &= \frac{g(m_1+m_p)}{m_1 l} x_4 + \frac{u_1}{m_1 l} - \frac{1}{m_1 l} \lambda_1,\\
    0 &\leq \lambda_1 \perp x_2 - x_1 + \frac{1}{k_1} \lambda_1 \geq 0,\\
    0 &\leq \lambda_2 \perp x_3 - x_2 + \frac{1}{k_2} \lambda_2 \geq 0,
\end{align*}
where the masses of the carts are $m_1 = m_2 = m_3 = 1$, $g = 9.81$ is the gravitational acceleration, $m_p = 1.5$ is the mass of the pole, $l = 0.5$ is the length of the pole, and $k_1 = k_2 = 100$ are stiffness parameters of the springs. Observe that we have control over the outer blocks, but do not have any control over the block in the middle. Additionally, we assume that we can not observe the middle block, and can only observe the outer blocks and the contact forces. For this example, we can solve the feasibility problem \eqref{eq:feasability_LCS} in 9.3 seconds and find a controller of the form $u(x,\lambda) = Kx + L \lambda$. We enforce sparsity on the controller $K$ to not use any feedback from the state $x_2$ or its derivative $\dot{x}_2$. This example demonstrates that tactile feedback can be used in scenarios where full state information is lacking. More precisely, we cannot observe the position and velocity of the cart in the middle, but we can design a controller that leverages tactile feedback and stabilize the system.

\begin{figure}[t]
	\includegraphics[width=1\columnwidth]{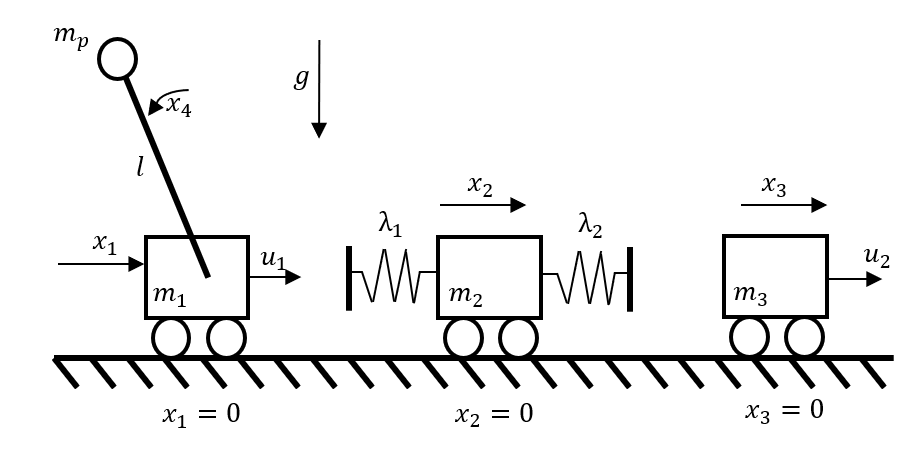}
	\caption{Regulation of carts to their respective origins without any state observations on the middle cart.}
\end{figure}

\subsection{Acrobot with Soft Joint Limits}
As a third example, we consider the classical underactuated acrobot, a double pendulum with a single actuator at the elbow (see \cite{murray1991case} for the details of the acrobot dynamics). Additionally, we add soft joint limits to the model. Hence we consider the model in Figure 3:
\begin{equation*}
	\dot{x} = A x + Bu + D \lambda,
\end{equation*}
where $x = (\theta_1, \theta_2, \dot{\theta}_1,\dot{\theta}_2)$, $\lambda = (\lambda_1, \lambda_2)$, and $D = \begin{bmatrix} 0_{2 \times 2} \\ M^{-1} J^T \end{bmatrix}$ with $J^T = \begin{bmatrix} -1 & 1 \\ 0 & 0 \end{bmatrix}$. For this model, the masses of the rods are $m_1 = 0.5$, $m_2 = 1$, the lengths of the rods are $l_1 = 0.5$, $l_2 = 1$, and the gravitational acceleration is $g = 9.81$. We model the soft actuation joint limit using the following complementarity constraints:
\begin{align*}
	&0 \leq d - \theta_1 + \dfrac{1}{k} \lambda_1 \perp \lambda_1 \geq 0,\\
	&0 \leq \theta_1 + d + \dfrac{1}{k} \lambda_2 \perp \lambda_2 \geq 0,
\end{align*}
where $k = 1$ is the stiffness parameter and $d = 1$ is the angle that represents the joint limits in terms of the angle $\theta_1$. For this example, we solve the feasibility problem \eqref{eq:feasability_LCS} and obtain a controller of the form $u(x,\lambda) = Kx + L \lambda$ in 1.18 seconds. For comparison, we also designed an LQR controller for the linear system where the penalty on the state is $Q=100I$ and the penalty on the input is $R=1$. We ran 100 trials on the nonlinear plant where initial conditions were sampled according to $x_1(0),x_2(0) \sim U[-0.05, 0.05]$, $x_3(0) \sim U[-0.2, 0.2]$ and $x_4(0) \sim U[-0.1, 0.1]$. Out of these 100 trials, LQR was successful only 29\% of the time whereas our design was successful 68\% of the time.

\begin{figure}[t]
	\includegraphics[width=1\columnwidth]{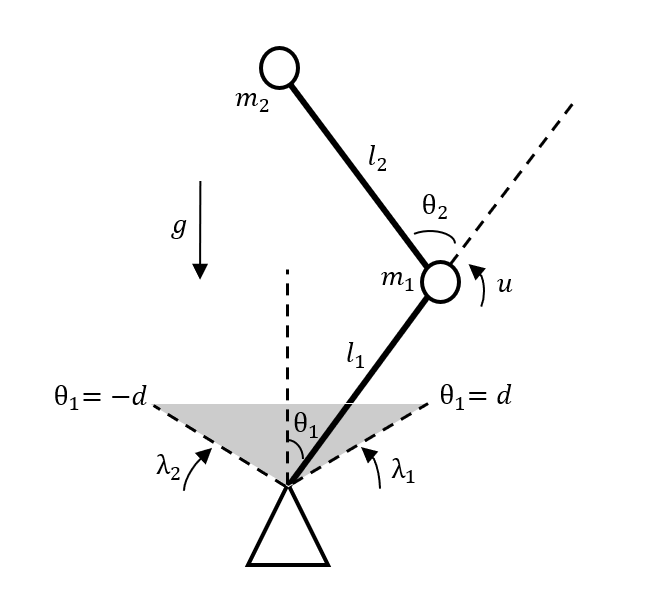}
	\caption{Acrobot with soft joint limits.}
\end{figure}

\section{Conclusion}
In this work, we have proposed an algorithm for synthesizing control policies that utilize both state and force feedback. We have shown that pure local, linear analysis was entirely insufficient and utilizing contact in the control design is critical to achieve high performance. Furthermore, the proposed algorithm exploits the complementarity structure of the system and avoids enumerating the exponential number of potential modes, enabling efficient design of multi-contact control policies. In addition to incorporating tactile sensing into dynamic feedback, we provide stability guarantees for our design method.

The algorithm requires solving feasibility problems that include bilinear matrix inequalities and we have used PENBMI \cite{kovcvara2003pennon} in this work. For the examples that are presented here, the runtime of the algorithm was short and we found solutions to the problems relatively quickly. On the other hand, it is important to note that for some parameter choices and initializations, the solver was unable to produce feasible solutions.

The current framework requires the contact forces to be continuous, and thus is limited to soft contact models. In future work, we will to extend our results to include impulsive impacts in the dynamics alongside continuous forces and tactile feedback.
We will also consider extensions to systems where there is a direct relation between the contact force and the input. For example,  quasistatic pushing and grasping  can be modeled as an LCS where the input $u$ also appears in the complementarity constraints \cite{halm2019quasi}.
These systems, when combined with our control policies, would create an algebraic loop that must be broken via introduction of delay or filtering (modeling sensor dynamics).


\addtolength{\textheight}{-1cm}  

\bibliographystyle{ieeetr}
\bibliography{Refs}

\end{document}